\documentclass{article}




  \usepackage[preprint]{neurips_2019}



\usepackage[utf8]{inputenc} 
\usepackage[T1]{fontenc}    
\usepackage{hyperref}       
\usepackage{url}            
\usepackage{booktabs}       
\usepackage{amsfonts}       
\usepackage{nicefrac}       
\usepackage{microtype}      
\usepackage{graphicx}
\usepackage{float}
\usepackage{xcolor}

\usepackage{amsthm}
\usepackage{bm}
\usepackage{amsmath}
\usepackage{amssymb}

\theoremstyle{plain}

\newtheorem{theorem}{\bf Theorem}[section]

\newtheorem{assumption}{\bf Assumption}[section]

\def\r{\ensuremath\mathbb{R}}

\def\n{\ensuremath\mathbb{N}}

\newcommand{\rn}[1]{\mathbb{R}^{#1}}


\newcommand{\sq}[2]{ \{ #1 \}_{ #2 } }

\newcommand{\norm}[1]{ \Vert #1 \Vert}

\theoremstyle{remark}

\newcommand{\eqn}[2]{\begin{equation} \label{#1} #2 \end{equation}}
\newcommand{\eqnn}[1]{\begin{equation*} #1 \end{equation*}}



\newcommand{\CE}[2]{\ensuremath E[ #1 \ | \ #2 ]}

\newcommand{\CEs}[3]{\ensuremath E_{#1}[ #2 \ | \ #3 ]}

\title{A Convergence Result for Regularized Actor-Critic Methods}

%

\author{%
  Wesley Suttle \\
  \texttt{wesley.suttle@stonybrook.edu} \\
   \And
   Zhuoran Yang \\
   \texttt{zy6@princeton.edu} \\
   \AND
   Kaiqing Zhang \\
   \texttt{kzhang66@illinois.edu} \\
   \And
   Ji Liu \\
   \texttt{ji.liu@stonybrook.edu} \\
}

\begin{document}

\maketitle

\begin{abstract}

In this paper, we present a probability one convergence proof, under suitable conditions, of a certain class of actor-critic algorithms for finding approximate solutions to entropy-regularized MDPs using the machinery of stochastic approximation. To obtain this overall result, we prove the convergence of policy evaluation with general regularizers when using linear approximation architectures and show convergence of entropy-regularized policy improvement.

\end{abstract}

\section{Introduction}

Entropy-regularized methods are a promising class of reinforcement learning algorithms, but until recently firm theoretical underpinnings have been lacking. In an important step towards remedying this, \cite{geist} lays theoretical foundations for the treatment of approximation algorithms for a new class of regularized Markov decision processes (reg-MDPs), of which entropy-regularized MDPs are a special case. Though \cite{geist} establish the theory of reg-MDPs, provide a regularized policy gradient theorem, and provide an interesting and potentially useful analysis of error propagations for approximation algorithms for this class of problem, they stop short of providing stochastic convergence results for approximation schemes for reg-MDPs.

The use of regularization in Markov decision processes and reinforcement learning is not new. Regularization in exact and approximate MDPs has been explored in \cite{ziebart}, \cite{odonoghue}, \cite{nachum}, and \cite{geist}. Among the methods making use of the Kullback-Leibler divergence and/or entropy regularization in a fundamental way in this context are trust region policy optimization (TRPO) \cite{shulman15}, G-learning \cite{fox}, dynamic policy programming (DPP) \cite{azar}, Soft Q-learning \cite{haarnoja17}, soft actor-critic (SAC) \cite{haarnoja}, smoothed Bellman error embedding (SBEED) \cite{dai18}, and proximal policy optimization (PPO) \cite{shulman17_2}, to name a few. The connections between Soft Q-learning and policy gradients with a regularizer using the Kullback-Leibler divergence have been previously explored in \cite{shulman17}. 

To the best of our knowledge, however, no work yet exists that applies the machinery of stochastic approximation -- used to such great effect in the theory of unregularized reinforcement learning under function approximation -- to the regularized case. In this paper we take this next step. We first begin in section 2 with a description of the general setting of reg-MDPs, as described in \cite{geist}. Next, we introduce a two-timescale, actor-critic scheme for approximate reg-MDP, similar to those analyzed in \cite{bhatnagar09}, to which our convergence results will apply. In section 4, we prove that, for general reg-MDPs, policy evaluation (i.e., the critic updates) converges at the faster timescale with probability one to the global best approximator when using linear approximation architectures. In section 5, we prove under suitable conditions that the policy improvement (or actor) updates given in section 3 converge with probability one. We then provide some closing remarks.

\section{Regularized Markov Decision Processes}

\subsection{Markov Decision Processes}

In this section we introduce the general framework of reg-MDPs as developed in \cite{geist}. Before describing reg-MDPs, we first recall the definition of an unregularized, infinite-horizon, discounted Markov decision process (MDP). An MDP is a tuple $(\mathcal{S}, \mathcal{A}, P, r, \gamma)$, where $\mathcal{S}$ is the state space, $\mathcal{A}$ is the action space, $p : \mathcal{S} \times \mathcal{S} \times \mathcal{A} \rightarrow [0,1]$ is a family of conditional distributions over $\mathcal{S}$ conditioned on $\mathcal{S} \times \mathcal{A}$, i.e. $p(s' | s, a)$ is the probability that the next state will be $s'$, given that the current state is $s$ and action $a$ is chosen, $r : \mathcal{S} \times \mathcal{A} \rightarrow \r$ is the reward function, and $\gamma \in [0,1)$ is the discount factor. In this paper, we assume that $\mathcal{S}$ and $\mathcal{A}$ are finite.

In this setting, an agent interacts with the MDP via a policy $\pi : \mathcal{A} \times \mathcal{S} \rightarrow [0,1]$, where $\pi(a | s)$ is the probability that action $a$ is chosen in state $s$. The long-run discounted reward of following policy $\pi$ is given state-wise by $v_{\pi}(s) = \CE{\sum_{k=0}^{\infty} \gamma^k r(s_k, a_k)}{s_0 = s, \pi}$, where $v_{\pi} \in \rn{|\mathcal{S}|}$ is called the value function of $\pi$. The goal is to determine a policy that maximizes this return for each state. To accomplish this goal, classical MDP theory considers two operators on the space of value functions. Defining them component-wise, the first is the Bellman operator for $\pi$,
\eqn{bellman1}{[T_{\pi} v](s) = \sum_{a \in \mathcal{A}} \pi(a|s) ( r(s,a) + \gamma \sum_{s' \in \mathcal{S}} p(s'|s,a)  v(s')),}
and the second is the Bellman optimality operator
\eqn{bellman2}{[T v](s) = \max_{\pi} [T_{\pi} v] (s).}
%
%
Important in many applications is a related function $q_{\pi} \in \rn{|\mathcal{S}| \times |\mathcal{A}|}$, called the action-value function of $\pi$, given by
\eqnn{q_{\pi}(s,a) = r(s,a) + \gamma \sum_{s'} p(s'|s,a) v_{\pi}(s').}
Importantly, we can define a Bellman operator $T'_{\pi}$ on the space $\rn{|\mathcal{S}| \times |\mathcal{A}|}$ of action-value functions
\eqn{bellman3}{[T'_{\pi} q](s,a) = r(s,a) + \gamma \sum_{s'} p(s'|s,a) \sum_{a'} \pi(a'|s') q(s',a'),}
of which $q_{\pi}$ given above is the fixed point. Notice that this operator results from combining the definition of $q_{\pi}$ with the fact that $v_{\pi}$ is the fixed point of (\ref{bellman1}). We will be using action-value functions extensively in what follows.

\subsection{Regularized MDPs}

\cite{geist} introduce the concept of a regularized Markov decision process by adding a regularizing function $\Omega : \Delta_{\mathcal{A}} \rightarrow \r$ to the Bellman operator (\ref{bellman1}), where $\Delta_{\mathcal{A}}$ is the probability simplex in $\rn{|\mathcal{A}|}$, then using it in conjunction with the resulting regularized optimality operator to develop a regularized analog of classic MDP. The presence of $\Omega$ assigns a value to the conditional distributions that make up a given policy, allowing the user of any resulting algorithm to enforce a preference for some policies over others in the search for an optimal policy. The archetypal and primary motivating example of such a regularizer, as considered in \cite{haarnoja}, is $\Omega(\pi(\cdot | s)) = \sum_a \log (\pi(a|s)) \pi(a|s)$, the negative entropy of $\pi(\cdot | s)$, where the entropy $- \Omega$ is a commonly used measurement of the randomness of $\pi(\cdot | s)$. \cite{geist} develop their theory for a more general class of regularizers, however, requiring only that $\Omega$ be differentiable and strongly convex.

Given a policy $\pi$, its regularized Bellman operator is given by
\eqn{regbellman1}{[T_{\pi,\Omega} v](s) = [T_{\pi}v](s) - \Omega(\pi(\cdot | s)),}
while the regularized Bellman optimality operator is
\eqn{regbellman2}{[T_{\Omega} v](s) = \max_{\pi(\cdot | s) \in \Delta_{\mathcal{A}}} [T_{\pi,\Omega} v](s).}
As proven in \cite{geist}, the operators (\ref{regbellman1}), (\ref{regbellman2}) are also $\gamma$-contraction mappings, and we can thus immediately construct algorithms for policy evaluation and improvement in this regularized setting by iterating to obtain their fixed points. Focusing on (\ref{regbellman1}), the resulting fixed point $v_{\pi,\Omega}$ and associated action-value function $q_{\pi,\Omega}$ are given by
\eqn{regv}{v_{\pi,\Omega}(s) = \sum_a \pi(a | s) (q_{\pi,\Omega}(s,a)) - \Omega(\cdot | s),}
\eqn{regq}{q_{\pi,\Omega}(s,a) = r(s,a) + \gamma \sum_{s'} p(s' | s,a) v_{\pi,\Omega}(s').}
Similar expressions are available for the fixed point of (\ref{regbellman2}), but we will not need them below, and refer the reader to \cite{geist}.

It is also possible to define an analog of (\ref{bellman3}) for the regularized setting:
\eqn{regbellman3}{[T'_{\pi,\Omega} q](s,a) = r(s,a) + \gamma \sum_{s'} p(s'|s,a) \sum_{a'} \pi(a'|s') \big[q(s',a') - \Omega(\pi(a'|s')) \big].}
Since we will usually consider only (\ref{regbellman3}) and not (\ref{regbellman1}) in what follows, we drop the apostrophe and simply write $T_{\pi,\Omega} q$ in place of $T'_{\pi,\Omega} q$. Whether the state-value function or action-value function operator is meant will always be clear from the context.

We will henceforth also supress the dependence of our regularized action-value functions on the regularizer $\Omega$, and simply write $q_{\pi}$ in place of $q_{\pi,\Omega}$, and $v_{\pi}$ in place of $v_{\pi,\Omega}$.

\section{Actor-Critic Algorithm for Entropy-Regularized MDPs}

Given the superior performance of reinforcement learning methods such as soft actor-critic (SAC) \cite{haarnoja} that seek approximate solutions to a given MDP by using function approximatioon and entropy regularization to instead approximately solve a related reg-MDP, it is clear that reg-MDPs are of significant practical and theoretical importance. Following the development of classic actor-critic methods in reinforcement learning, a natural next step is to seek a firm theoretical foundation for actor-critic methods for approximate reg-MDP.

Below we give an entropy-regularized actor-critic algorithm in keeping with the classic actor-critic of unregularized reinforcement learning: it is a two-timescale stochastic approximation algorithm that performs gradient ascent in the policy parameters $\theta$ to maximize a specific measurement $J(\theta)$ of long-term reward. The algorithm we present takes its basic form from \cite{geist}. Let a finite state- and action-space reg-MDP as in section 2 be given. Let $q_{\omega}(\cdot, \cdot)$ be a family of action-value function approximators parametrized by $\omega \in U_{\omega} \subseteq \rn{k}$, and let $\pi_{\theta}(\cdot | \cdot)$ be a family of policy function parametrized by $\theta \in U_{\theta} \subseteq \rn{l}$, where $k, l \ll | \mathcal{S} |, | \mathcal{A} |$, and where $q_{\omega}$ and $\pi_{\theta}$ are continuously differentiable in their respective parameters. Let two positive stepsize sequences $\{ \beta_{\omega, t} \}, \{ \beta_{\theta, t} \}$ be given satisfying $\sum \beta_{\omega, t} = \infty, \sum \beta_{\theta, t} = \infty$, $\sum \beta_{\omega, t}^2  + \beta_{\theta, t}^2 < \infty$, and $\lim_t \frac{\beta_{\theta, t}}{\beta_{\omega, t}} = 0$. Finally, before proceeding, we need to make some standard assumptions regarding $\pi_{\theta}$. 
\begin{assumption}
There exists $\varepsilon > 0$ such that, for each $\theta \in U_{\theta}$, and for all state-action pairs $(s,a)$,  $\pi_{\theta}(s,a) \geq \varepsilon$.
\end{assumption}
\begin{assumption}
For each $\theta \in U_{\theta}$, the policy $\pi_{\theta}$ induces an ergodic, irreducible Markov chain on $\mathcal{S} \times \mathcal{A}$.
\end{assumption}
The first assumption ensures that all policies remain sufficiently exploratory, and is especially reasonable in light of the fact that entropy regularization penalizes overly deterministic policies. The second assumption is almost a consequence of the first, but it also imposes some additional structure on the transition probability function $p$. The objective of the algorithm is to maximize the expected, long-run regularized reward given by
$$J(\theta) = \sum_s d_{\theta}(s) \sum_a \pi_{\theta}(a | s) \big[ q_{\theta}(s,a) - \log (\pi_{\theta}(a|s)) \big],$$
where $d_{\theta}(s) = \sum_{t = 0}^{\infty} \gamma^t P(s_t = s \ | \ s_0 = s, \pi_{\theta})$ is the discounted weighting to states visited starting in $s$ and following $\pi_{\theta}$ \cite{sutton00}.
The entropy-regularized actor-critic algorithm that is the focus of this paper is given by the update equations
\eqn{regac1}{\omega_{t+1} = \omega_t + \beta_{\omega, t} \delta_{t+1} \nabla q_{\omega_t}(s_t, a_t),}
\eqn{regac2}{\theta_{t+1} = \theta_t + \beta_{\theta, t} \psi_t,}
where
\eqnn{\delta_{t+1} = r(s_t, a_t) + \gamma \big[ q_{\omega_t}(s_{t+1}, a_{t+1}) - \log ( \pi_{\theta_t} (a_t | s_t)) \pi_{\theta_t} (a_t | s_t) \big] - q_{\omega_t}(s_t, a_t)}
and
\eqnn{\psi_t = q_{\omega_t}(s_t, a_t) \nabla \log (\pi_{\theta_t}(a_t | s_t)) - \frac{1}{\pi_{\theta_t}(a_t | s_t)} \nabla \big[ \log ( \pi_{\theta_t} (a_t | s_t)) \pi_{\theta_t} (a_t | s_t) \big].}
Though the expressions for $\delta_{t+1}$ and $\psi_t$ may appear complicated at first sight, they should be natural to a reader familiar with temporal difference and actor-critic methods: $\delta_{t+1}$ is simply the regularized temporal difference, while  $\psi_t$ is the policy gradient derived in \cite{geist}, adapted to the entropy-regularized setting. It is important to note here that, when used in the algorithm above, the estimate of the policy gradient provided by $\psi_t$ is biased, in general. This is due to the fact that, as will be seen in the following section, the estimate of $q_{\omega_t}$ used in each update is itself a biased estimate of the true value function $q_{\theta_t}$ corresponding to the current policy $\pi_{\theta_t}$. We will discuss the effects of this fact on the behavior of the algorithm below.

Following the procedure in \cite{konda02}, assume that some probability distribution $\xi(\cdot)$ over the states $S$ has been given, and that the transitions of the MDP for the duration of the algorithm are generated as follows: with probability $\gamma$, the transition out of the current $(s,a)$ to next state $s'$ occurs according to $s' \sim p(\cdot | s, a)$; with probability $1-\gamma$, the next state is generated by $s' \sim \xi(\cdot)$. This ensures that, given a fixed policy parameter $\theta$, the steady-state distribution of the induced Markov chain is given by the $\gamma$-weighted occupancy measure given by $(1-\gamma) d_{\theta}$. When $\gamma$ is close to 1, this distribution is close to that of the Markov chain induced by $\pi_{\theta}$, and the transitions are in practice more or less safely generated as $s' \sim p(\cdot | s, a)$ at each timestep.

\section{Policy Evaluation for General Regularizers}

The algorithm above is formulated specifically with entropy as the regularizer of the underlying reg-MDP. For a fixed policy $\pi_{\theta}$, however, the almost sure convergence of the policy evaluation (critic) step is straightforward to prove for any regularizer, so we give the proof for this general case in the appendix. Proving it in this more general setting may also be helpful for future work on approximate reg-MDP.

The central result of this section is that, when using linear approximation architectures for $q_{\omega}$, and assuming the policy $\pi_{\theta}$ to be fixed, the update equation (\ref{regac1}) converges almost surely (a.s.) to the fixed point of a projected Bellman equation based on (\ref{regbellman3}). Intuitively, this means that (\ref{regac1}) converges to the global best approximator $q_{\omega^*}$ of $q_{\pi_{\theta}}$ when using linear approximation. Before we can state and prove this result formally, we need to develop some additional notation.

Consider the pairs of the state-action space $\mathcal{S} \times \mathcal{A}$ to be labeled and ordered as follows: $\mathcal{S} \times \mathcal{A} = \{ (s_1, a_1), (s_1, a_2), \ldots, (s_{|S|}, a_{|A|-1}), (s_{|S|}, a_{|A|}) \}.$ Let $\phi : \mathcal{S} \times \mathcal{A} \rightarrow \rn{K}$, where $K \ll | \mathcal{S} | \cdot | \mathcal{A} |$, be a mapping of the state-action space into the feature space $\rn{K}$. Let $\Phi = [ \phi(s_1, a_1), \phi(s_1, a_2), \ldots, \phi(s_{|S|}, a_{|A|}) ]^T$, the matrix whose rows are the feature vectors $\phi(s_i, a_j)$,\footnote{Unless explicitly stated otherwise, all vectors are assumed to be column vectors.} occuring from top to bottom in the order of the enumeration of $\mathcal{S} \times \mathcal{A}$ given above, and let $\bm{r} = [ r(s_1, a_1), r(s_1, a_2), \ldots, r(s_{|S|}, a_{|A|}) ]^T$ be the vector of rewards. We make the following standard assumption on the feature matrix $\Phi$.
\begin{assumption}
$\Phi$ has linearly independent columns.
\end{assumption}
For the remainder of this section, fix a regularizer $\Omega$ satisfying the conditions of section 2, and assume that $\pi := \pi_{\theta}$ is fixed. Since the $\omega$-updates of the critic step occur at the faster timescale of our algorithm, the current policy parameter $\theta$ appears to be fixed from $\omega$'s frame of reference. We can thus perform our $\omega$-updates as if the parameter $\theta$ and thus the policy $\pi$ is fixed. For details on two-timescale algorithms, see \cite{borkar}.

Let $\bm{q}_{\pi} = [q_{\pi}(s_1,a_1), q_{\pi}(s_1,a_2), \ldots, q_{\pi}(s_{|S|},a_{|A|}) ]^T$ be the vector of regularized action-values, let $\bm{q}_{\omega}$ be defined analogously as the vector of regularized action-value function approximator values, define $\Omega(\pi) = [ \Omega(\pi(\cdot | s_1)), \Omega(\pi(\cdot | s_2)), \ldots, \Omega(\pi(\cdot | s_{|S|}))]^T$, and let $\Omega_{\pi} = \Omega(\pi) \otimes \bm{1}$, where $\bm{1} \in \rn{|A|}$ is the vector of all ones and $\otimes$ denotes the Kronecker product. Let 
\[ P_{\pi} =
\begin{bmatrix}
	p(s_1 | s_1, a_1) \pi(a_1 | s_1) & p(s_1 | s_1, a_1) \pi(a_2 | s_1) & \ldots & p(s_{|S|} | s_1, a_1) \pi(a_{|A|} | s_1) \\
	p(s_1 | s_1, a_2) \pi(a_1 | s_1) & p(s_1 | s_1, a_2) \pi(a_2 | s_1) & \ldots & p(s_{|S|} | s_1, a_2) \pi(a_{|A|} | s_1) \\
	\vdots & \vdots & & \vdots \\
	p(s_{1} | s_{|S|}, a_{|A|}) \pi(a_1 | s_{|S|}) & p(s_1 | s_{|S|}, a_{|A|}) \pi(a_2 | s_{|S|}) & \ldots & p(s_{|S|} | s_{|S|}, a_{|A|}) \pi(a_{|A|} | s_{|S|}) \\
\end{bmatrix}
\]
denote the matrix of transition probabilities of the Markov chain it induces on $\mathcal{S} \times \mathcal{A}$, where $p(s_k | s_i, a_j) \pi(a_l | s_i)$ is the probability of transitioning from $(s_i, a_j)$ to $(s_k, a_l)$. Finally, let $\nu_{\pi}(\cdot,\cdot)$ denote the steady-state distribution of the Markov chain on $\mathcal{S} \times \mathcal{A}$ induced by $\pi$, and define $N_{\pi} = \text{diag}(\nu_{\pi}(s_1,a_1), \nu_{\pi}(s_1,a_2), \ldots, \nu_{\pi}(s_{|S|},a_{|A|}))$.

With the notation above, we can now write (\ref{regbellman3}) in matrix form as
\eqn{vecregbellman}{T_{\pi} q = \bm{r} + \gamma P_{\pi} (q - \Omega_{\pi}) = \bm{r} + \gamma P_{\pi} \Omega_{\pi} + \gamma P_{\pi} q.}
We now introduce the assumption of a linear approximation architecture for the action value function.
\begin{assumption}
Given parameter $\omega \in U_{\omega}$ and $(s,a) \in \mathcal{S} \times \mathcal{A}$, we have $q_{\omega}(s,a) = \phi(s,a)^T \omega$, or, in vector form, $\bm{q}_{\omega} = \Phi \omega$.
\end{assumption}
Under this assumption, the update equation (\ref{regac1}) becomes
\eqn{critic1}{\omega_{t+1} = \omega_t + \beta_{\omega, t} \delta_{t+1} \phi_t,}
where $\delta_{t+1} = r_{t+1} - \gamma \Omega(\pi(\cdot | s_{t+1})) + \gamma \phi_{t+1}^T \omega_t - \phi_t^T \omega_t$, $r_{t+1} = r(s_t,a_t)$, and $\phi_t = \phi(s_t,a_t)$. Notice that the exact evaluation $\Omega(\pi(\cdot | s_{t+1}))$ of $\Omega$ at $\pi(\cdot | s_{t+1})$ occurs in (\ref{critic1}), and an unbiased estimate of it occurs in (\ref{regac1}).

We now give the main result of this section.
\begin{theorem}
The iterative scheme (\ref{critic1}) converges a.s. to the unique $\omega^*$ such that $\Phi \omega^*$ is the fixed point of the projected Bellman equation
\eqn{criticpbe}{\Phi \omega = \Pi_{N_{\pi}} T_{\pi}(\Phi \omega),}
where $\Pi_{N_{\pi}}$ is the projection onto $\text{Col}(\Phi)$ with respect to the weighted Euclidean norm $||\cdot||_{2, N_{\pi}}$ and $T_{\pi}$ is as in (\ref{vecregbellman}).
\end{theorem}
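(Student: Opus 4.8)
The plan is to recognize (\ref{critic1}) as a linear stochastic approximation recursion and to analyze it with the ODE method of \cite{borkar}. First I would expand the update by substituting the definition of $\delta_{t+1}$ to put it in the form $\omega_{t+1} = \omega_t + \beta_{\omega,t}(A_t\omega_t + b_t)$, where $A_t = \phi_t(\gamma\phi_{t+1} - \phi_t)^T$ and $b_t = \phi_t(r_{t+1} - \gamma\Omega(\pi(\cdot|s_{t+1})))$. Taking expectations under the steady-state distribution $\nu_\pi$ (which exists and is unique by Assumption 3.2) and using $E_{\nu_\pi}[\phi_t\phi_t^T] = \Phi^T N_\pi\Phi$ and $E_{\nu_\pi}[\phi_t\phi_{t+1}^T] = \Phi^T N_\pi P_\pi\Phi$, the associated limiting ODE is $\dot\omega = A\omega + b$ with
$$A = -\Phi^T N_\pi(I - \gamma P_\pi)\Phi, \qquad b = \Phi^T N_\pi(\bm{r} - \gamma P_\pi\Omega_\pi).$$
A short computation using $\Omega_\pi = \Omega(\pi)\otimes\bm{1}$ and $\sum_a\pi(a|s) = 1$ confirms that the regularizer contributes exactly the term $-\gamma P_\pi\Omega_\pi$ to the effective reward appearing in $b$, and crucially does not enter $A$.

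Next I would verify the standard hypotheses of the stochastic approximation theorem. The stepsize conditions $\sum\beta_{\omega,t} = \infty$ and $\sum\beta_{\omega,t}^2 < \infty$ hold by assumption. Writing $M_{t+1} = (A_t\omega_t + b_t) - E[A_t\omega_t + b_t \mid \mathcal{F}_t]$ yields a martingale difference sequence whose conditional second moment is bounded, because $\mathcal{S}\times\mathcal{A}$ is finite (so $\phi$ and $\bm{r}$ are bounded) and because Assumption 3.1 keeps each $\pi(\cdot|s)$ in a compact subset of the interior of the simplex on which the continuous regularizer $\Omega$ is bounded --- this last point is what controls the otherwise-singular entropy term. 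Assumption 3.2 supplies the ergodicity needed to identify the time-averaged drift with its stationary expectation.

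The heart of the proof is the stability of the ODE, i.e.\ showing that $A$ is Hurwitz with the solution of (\ref{criticpbe}) as its unique globally asymptotically stable equilibrium. This reduces to showing $\Phi^T N_\pi(I - \gamma P_\pi)\Phi$ is positive definite. Here I would use the classical argument that $P_\pi$ is a non-expansion in the $\|\cdot\|_{2,N_\pi}$ norm: since $\nu_\pi$ is the stationary distribution of $P_\pi$, Jensen's inequality gives $\|P_\pi y\|_{2,N_\pi} \le \|y\|_{2,N_\pi}$ for every $y$. Then for any $\omega\neq 0$, setting $y = \Phi\omega$ (nonzero by Assumption 4.1),
$$\omega^T\Phi^T N_\pi(I - \gamma P_\pi)\Phi\,\omega = \|y\|_{2,N_\pi}^2 - \gamma\, y^T N_\pi P_\pi y \ge (1-\gamma)\|y\|_{2,N_\pi}^2 > 0,$$
where the inequality follows from Cauchy--Schwarz in the $N_\pi$-inner product together with the non-expansion bound. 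Hence $A$ is negative definite, its equilibrium $\omega^* = (\Phi^T N_\pi(I-\gamma P_\pi)\Phi)^{-1}b$ is unique, and a direct manipulation of (\ref{criticpbe}) (cancelling $\Phi$ by full column rank) identifies this $\omega^*$ with the fixed point of the projected Bellman equation. Since a linear ODE with Hurwitz matrix is globally asymptotically stable and its scaled field $\dot\omega = A\omega$ drives the iterates to a bounded set, the Borkar--Meyn conditions are met and $\omega_t\to\omega^*$ almost surely.

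The main obstacle is the stability step, and within it the non-expansion property of $P_\pi$ under the $N_\pi$-weighted norm; everything else is a routine adaptation of the linear TD analysis. It is worth emphasizing that because $\Omega$ affects only the constant $b$ and never the matrix $A$, the convergence and stability analysis is essentially unchanged from the unregularized setting --- the regularizer merely relocates the fixed point.
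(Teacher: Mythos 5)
Your proposal is correct and follows essentially the same route as the paper's Appendix A.1 proof: recast (\ref{critic1}) as a Borkar-style stochastic approximation scheme, verify the stepsize, Lipschitz, martingale-difference, and ergodicity conditions, compute the averaged drift $\overline{h}(\omega) = \Phi^T N_{\pi}\big(\bm{r} + \gamma P_{\pi}(\Phi\omega - \Omega_{\pi}) - \Phi\omega\big)$, and use the Borkar--Meyn scaled ODE $\dot{\omega} = h_{\infty}(\omega)$ for a.s.\ boundedness before invoking convergence to the PBE fixed point. The one place you go beyond the paper is a genuine improvement in rigor: you actually prove that $\Phi^T N_{\pi}(I - \gamma P_{\pi})\Phi$ is positive definite (via Jensen's inequality and the non-expansion of $P_{\pi}$ in $\|\cdot\|_{2,N_{\pi}}$), whereas the paper simply asserts its invertibility and calls the global asymptotic stability of $\dot{\omega} = h_{\infty}(\omega)$ ``clear,'' outsourcing the identification with (\ref{criticpbe}) to \cite{bertsekas}.
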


\section{Policy Improvement with Entropy Regularization}

We now demonstrate the convergence of the actor updates (\ref{regac2}). To facilitate our convergence analysis, we make the following assumption.
\begin{assumption}
The set $U_{\theta}$ within which the policy parameters $\theta$ are constrained to lie is convex and compact, and the $\theta$-updates (\ref{regac2}) include a projection operator $\Gamma : \rn{l} \rightarrow U_{\theta}$, i.e.:
\eqn{actor1}{\theta_{t+1} = \Gamma \big( \theta_t + \beta_{\theta, t} \psi_t \big).}
\end{assumption}
The operator $\Gamma$ is usually taken to be the projection with respect to the Euclidean distance, but projections with respect to other metrics may be more useful in certain situations \cite{borkar}. It is worth noting that, since $U_{\theta}$ is convex, $\Gamma$ is single-valued. This projection procedure is a common technique in both the theory and practice of stochastic approximation: in theoretical work, it is a common way to stabilize the asymptotic behavior of sequences of iterates; in practice, we very often optimize over a specific feasible region. See \cite{borkar} and \cite{kushner} for further details on projected stochastic approximation, as well as \cite{bhatnagar09} for more details on the projection techniques applied to the actor step in the unregularized case.

Let $\mathcal{G} = \sigma (\theta_{\tau} ; \tau \leq t)$, and define
$$ \psi_{t, \theta_t} = q_{\omega_{\theta_t}}(s_t, a_t) \nabla \log (\pi_{\theta_t}(a_t | s_t)) - \frac{1}{\pi_{\theta_t} (a_t | s_t)} \nabla \big[ \log (\pi_{\theta_t}(a_t | s_t)) \pi_{\theta_t}(a_t|s_t) \big],$$
where $\omega_{\theta_t}$ is the limit point of the critic step for fixed $\theta_t$. Define 
$$h(\theta_t) = \CE{\psi_{t,\theta_t}}{\mathcal{G}_t} = \sum_{s_t \in \mathcal{S}} d_{\theta_t}(s_t) \sum_{a_t \in \mathcal{A}} \pi_{\theta_t}(s_t,a_t) \psi_{t, \theta_t},$$
and consider the following rephrasing of (\ref{actor1}):
\eqn{actor2}{\theta_{t+1} = \Gamma \big( \theta_t + \beta_{\theta,t} ( h(\theta_t) + \zeta_{t,1} + \zeta_{t,2}) \big),}
where $\zeta_{t,1} = \psi_t - \CE{\psi_t}{\mathcal{G}_t}$ and $\zeta_{t,2} = \CE{\psi_t - \psi_{t,\theta_t}}{\mathcal{G}_t}$. Consider also the ODE associated with (\ref{actor2}):
\eqn{actor3}{\dot{\theta} = \hat{\Gamma}(h(\theta)),}
where $\hat{\Gamma}$ is as defined in Appendix A.3 in the supplementary materials.
Notice that $h(\theta) = \nabla J(\theta)$. Since $J(\theta)$ is differentiable and $U_{\theta}$ is compact, the set of equilibria of (\ref{actor3}) consists of the stationary points of $h(\theta)$ on $U_{\theta}$.
We have the following.
\begin{theorem}
The sequence of iterates generated by (\ref{actor2}) converges a.s. to a stationary point of (\ref{actor3}).
\end{theorem}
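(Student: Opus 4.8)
The plan is to recognize (\ref{actor2}) as a projected stochastic approximation recursion driven by the mean field $h(\theta)$ and to invoke the Kushner--Clark convergence theorem for projected dynamics (see \cite{kushner}, \cite{borkar}), exactly as in the unregularized actor analysis of \cite{bhatnagar09}. The decomposition of the update into the mean field $h(\theta_t)$, the martingale-difference term $\zeta_{t,1} = \psi_t - \CE{\psi_t}{\mathcal{G}_t}$, and the bias term $\zeta_{t,2} = \CE{\psi_t - \psi_{t,\theta_t}}{\mathcal{G}_t}$ is already in place, and $h(\theta) = \nabla J(\theta)$ is a gradient field. Convergence then follows once we verify (i) Lipschitz continuity of $h$ on $U_\theta$, (ii) boundedness of the iterates, (iii) that the accumulated martingale noise converges, and (iv) that the bias $\zeta_{t,2}$ vanishes asymptotically.

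Conditions (i)--(iii) are routine. Boundedness of $\{\theta_t\}$ is immediate from Assumption 5.1, since every iterate is forced into the compact set $U_\theta$ by $\Gamma$. For the noise, $\{\zeta_{t,1}\}$ is by construction a martingale-difference sequence with respect to $\{\mathcal{G}_t\}$; because $\mathcal{S}$ and $\mathcal{A}$ are finite, the rewards are bounded, and Assumption 3.1 keeps $\pi_{\theta}(a|s) \ge \varepsilon$ so that $\log\pi_\theta$, $\nabla\log\pi_\theta$, and $1/\pi_\theta$ are uniformly bounded over the compact $U_\theta$, the terms $\zeta_{t,1}$ have uniformly bounded second moments. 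Hence $M_t = \sum_{\tau\le t}\beta_{\theta,\tau}\zeta_{\tau,1}$ is a square-integrable martingale with square-summable increments (since $\sum\beta_{\theta,t}^2 < \infty$), and it converges a.s. by the martingale convergence theorem. Lipschitz continuity of $h$ follows from the continuous differentiability of $\pi_\theta$, the bound $\pi_\theta \ge \varepsilon$, the continuity of $d_\theta$ and of the critic limit $\omega_\theta$ in $\theta$ (the latter solving the projected Bellman equation (\ref{criticpbe}), whose data $P_\pi, N_\pi, \Omega_\pi$ depend continuously on $\theta$), and compactness of $U_\theta$.

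The substance of the proof, and the step I expect to be the \emph{main obstacle}, is establishing (iv): $\zeta_{t,2} \to 0$ a.s. This is exactly where the two-timescale structure is essential. Since $\lim_t \beta_{\theta,t}/\beta_{\omega,t} = 0$, the critic evolves on the fast timescale and the actor appears quasi-static from its viewpoint; combining the fast-timescale convergence established in Section 4 with the standard two-timescale tracking argument (\cite[Chapter 6]{borkar}) gives $\norm{\omega_t - \omega_{\theta_t}} \to 0$ a.s. The difference $\psi_t - \psi_{t,\theta_t}$ depends on $\omega_t$ only through $q_{\omega_t}(s_t,a_t) = \phi_t^T\omega_t$ (Assumption 4.2), which is jointly continuous in $(\omega,\theta)$ and, together with $\nabla\log\pi_\theta$, uniformly bounded on the compact parameter sets. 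Hence $\psi_t - \psi_{t,\theta_t} \to 0$ a.s., and by dominated convergence $\zeta_{t,2} = \CE{\psi_t - \psi_{t,\theta_t}}{\mathcal{G}_t} \to 0$ a.s. The care required here lies in making the tracking estimate uniform in $\theta$ and in ensuring that conditioning does not spoil the vanishing of the per-step bias; this coupling between the two timescales is the technically delicate point.

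With (i)--(iv) in hand, (\ref{actor2}) is a projected stochastic approximation scheme whose asymptotic behavior is governed by the projected ODE (\ref{actor3}), $\dot\theta = \hat\Gamma(h(\theta))$. By the Kushner--Clark theorem, the iterates converge a.s. to an internally chain transitive invariant set of this ODE contained in $U_\theta$. Finally, because $h = \nabla J$ and $U_\theta$ is convex and compact, $J$ serves as a Lyapunov function for the projected gradient-ascent dynamics: $J$ is nondecreasing along trajectories and strictly increasing away from the stationary points of $h$, so the only invariant sets are the equilibria of (\ref{actor3}). This forces convergence of $\{\theta_t\}$ to a stationary point, completing the argument.
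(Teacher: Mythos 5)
Your proposal is correct and follows essentially the same route as the paper: decompose the projected update into the mean field $h(\theta)=\nabla J(\theta)$ plus the noise terms $\zeta_{t,1}$, $\zeta_{t,2}$, verify the Kushner--Clark conditions (stepsizes by assumption, the martingale term via a.s.\ convergence of $\sum_\tau \beta_{\theta,\tau}\zeta_{\tau,1}$ using square-summable increments, and $\zeta_{t,2}\to 0$ a.s.\ from the fast-timescale critic convergence of Section 4), and conclude via the projected ODE (\ref{actor3}). The only differences are elaborations rather than a new approach---you spell out the two-timescale tracking argument and add a Lyapunov argument with $J$ to pass from invariant sets to stationary points, steps the paper leaves implicit (noting only that $\{\omega_t\}$ is a.s.\ bounded, which is also what is actually needed for the a.s.\ boundedness of $\zeta_{t,1}$, since $\psi_t$ depends on $\omega_t$ through $q_{\omega_t}$).
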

\begin{proof}
We verify the conditions of the Kushner-Clark lemma given in the appendix.

To see that $h$ is continuous in $\theta_t$, we note that the stationary distribution $d_{\theta_t}$, as well as the policy $\pi_{\theta_t}$ are continuous in $\theta_t$, and that $\omega_{\theta_t}$ can also be shown to be continuous in $\theta_t$, since it is the unique solution (\ref{critic3}), and finally that $N_{\pi_{\theta_t}}, P_{\pi_{\theta_t}},$ and $\Omega_{\pi_{\theta_t}}$ are all continuous in $\theta_t$, where $\Omega(\pi_{\theta_t}(\cdot | s_t))$ is the negative entropy of $\pi_{\theta_t}$. Next, condition 1 of Theorem A.3 is satisfied by our assumptions on the stepsize sequence $\{ \beta_{\theta,t} \}$. For condition 3, we know by our proof of the critic step above that $\psi_t \rightarrow \psi_{t,\theta_t}$ a.s., and thus $\zeta_{t,2} \rightarrow 0$ a.s., and, since $\{ \omega_t \}$ is bounded a.s., we also have $\{ \zeta_{t,2} \}$ is bounded a.s.

All that's left to verify is condition 2. Define $\mathcal{M}_t = \sum_{\tau = 0}^t \beta_{\theta, \tau+1} \zeta_{\tau+1,1}.$ We then have
$$\CE{\mathcal{M}_{t+1}}{\mathcal{M}_t} = \CE{\beta_{\theta, t+1} \zeta_{t+1} + \mathcal{M}_t}{\mathcal{M}_t} = \beta_{\theta,t+1} \CE{\zeta_{t+1}}{\mathcal{M}_t} + \mathcal{M}_t = \mathcal{M}_t,$$
so $\{\mathcal{M}_t \}$ is a martingale. Notice that $\sum_{t=0}^{\infty} \norm{\beta_{\theta, t+1} \zeta_{t+1,1}}^2 < \infty$ a.s., since $\{\zeta_{t,1} \}$ is bounded a.s. by the fact that $\{\omega_t\}$ is bounded a.s. and since $\psi_t$ is continuous in $\theta_t$ and $U_{\theta}$ is compact. Thus
$$\sum_{t=0}^{\infty} \norm{\mathcal{M}_{t+1} - \mathcal{M}_t}^2 = \sum_{t=0}^{\infty} \norm{\beta_{\theta,t+1} \zeta_{t+1,1}}^2 < \infty$$
a.s., whence $\{ \mathcal{M}_t \}$ converges a.s. This implies that
$$\lim_t P \big( \sup_{n \geq t} \norm{\sum_{\tau=t}^{\infty} \beta_{\theta,\tau} \zeta_{\tau,1}} \geq \varepsilon \big) = 0,$$
completing the verification of condition 2 and thus the proof.
\end{proof}

As noted above, the fact that the $q$-function estimates given by the critic step at the faster timescale are in general biased affects the limit point $\omega^*$ of the algorithm. So long as the distance between $q_{\omega_t}$ and the true value $q_{\theta_t}$ is small, however, which is often the case when the approximator $q_{\omega}$ is sufficiently expressive, it can be shown that $\theta^*$ is nonetheless within a small neighborhood of a parameter corresponding to a locally optimal policy. It is technically possible for $\theta^*$ to correspond to a saddle point or minimum. Convergence to such ``unstable points'' can be prevented by suitable perturbation methods, however, and it is often the case in practice that the updates are sufficiently noisy to prevent such convergence from occurring. For more on non-convergence to unstable points, see \cite{kushner}.

\section{Conclusion}

In this paper we have provided fundamental stochastic convergence results for an important class of approximate solution methods to regularized Markov decision processes. Specifically, we have proved convergence of policy evaluation under linear function approximation for the case of general regularizers and demonstrated convergence of an entropy-regularized policy gradient method. We have contributed to the foundations for approximate reg-MDP in this paper, but many interesting and important open questions remain. Among the more theoretical future directions, extension of these results to continuous state and action spaces is key. Among the practical directions, significant empirical studies of the practical convergence properties, as well as comparison with state-of-the-art algorithms are warranted.

\newpage

\bibliographystyle{plain}
\bibliography{neurips19}

\newpage

\appendix
\section{Appendix}

\subsection{Proof of Theorem 4.1}

\begin{proof}
We first recast (\ref{critic1}) as an instance of the stochastic approximation scheme (\ref{sa:eq1}), then verify the stochastic approximation conditions given in section A.2. Define $h : \rn{\text{Rank}(\Phi)} \times \mathcal{S} \times \mathcal{A} \rightarrow \rn{\text{Rank}(\Phi)}$ by
\eqnn{h(\omega, s, a) = \CEs{s',a'}{(r(s,a) + \gamma (\phi(s',a')^T \omega - \Omega(\pi(\cdot | s'))) - \phi(s,a)^T \omega) \phi(s,a)}{\omega, s, a},}
and let $\xi_{t+1} = \delta_{t+1} \phi_t - \CE{\delta_{t+1} \phi_t}{\mathcal{F}_t}$, where $\mathcal{F}_t = \sigma(\omega_{\tau}, s_{\tau}, a_{\tau}; \tau \leq t)$ is the $\sigma$-algebra generated by the random variables $\omega_{\tau}, s_{\tau}, a_{\tau}$ up to time $t$. Notice that $\Phi \in \rn{|\mathcal{S}|\cdot|\mathcal{A}| \times K}$ has $\text{Rank}(\Phi) = K$, since we assumed $K \ll |\mathcal{S}|\cdot|\mathcal{A}|$ and that $\Phi$ has linearly independent columns. 

We can now rewrite (\ref{critic1}) as
\eqn{critic2}{\omega_{t+1} = \omega_t + \beta_{\omega,t} \big[ h(\omega_t, s_t, a_t) + \xi_{t+1} \big],}
and verify the five conditions of section A.2 to complete the proof.

Condition 2 is satisfied by the assumption on the stepsizes $\{\beta_{\omega,t}\}$ given in section 3, while condition 4 is ensured by assumption 3.2. To see that $h$ is Lipschitz in its first argument, fix $\omega, \omega', s, a$ and notice that
\[
\norm{h(\omega, s, a) - h(\omega', s, a)} = \norm{\CE{(\gamma \phi(s',a')^T - \phi(s,a)^T)(\omega - \omega') \phi(s,a)}{\omega, \omega', s, a}}
\]
$$\leq \CE{| (\gamma \phi(s',a')^T - \gamma \phi(s,a)^T)(\omega - \omega') | \cdot \norm{\phi(s,a)}}{\omega, \omega', s, a}$$
\[
\leq \CE{\norm{\gamma \phi(s',a') - \phi(s,a)}}{\omega, \omega', s, a} \cdot \norm{\phi(s,a)} \cdot \norm{\omega - \omega'} \leq C \norm{\omega - \omega'}
\]
for some $C > 0$, since the fact that $\mathcal{S}, \mathcal{A}$ are finite implies that $\norm{\gamma \phi(s',a') - \phi(s,a)}$ and $\norm{\phi(s,a)}$ are uniformly bounded, and where the first inequality follows by an application of Jensen's inequality and the second follows from the Cauchy-Schwarz inequality and the fact that the expectation is conditioned on $s, a, \omega,$ and $\omega'$. Thus condition 1 is satisfied.

We next show that $\{\xi_t\}$ is a martingale difference sequence with respect to the filtration $\{ \mathcal{F}_t \}$ such that $\CE{\norm{\xi_{t+1}}^2}{\mathcal{F}_t} \leq C(1 + \norm{\omega_t}^2)$ a.s., for some $C > 0$. First, we clearly have that
\[
\CE{\xi_{t+1}}{\mathcal{F}_t} = \CE{\delta_{t+1} \phi_t - \CE{\delta_{t+1} \phi_t}{\mathcal{F}_t}}{\mathcal{F}_t} = \CE{\delta_{t+1} \phi_t}{\mathcal{F}_t} - \CE{\delta_{t+1} \phi_t}{\mathcal{F}_t} = 0, 
\]
so $\{ \xi_t \}$ is indeed a martingale difference sequence. Next, by the Cauchy-Schwarz inequality we have that
\[
\norm{\xi_{t+1}}^2 \leq \norm{\delta_{t+1} \phi_t}^2 + 2 \norm{\delta_{t+1} \phi_t} \cdot \norm{\CE{\delta_{t+1} \phi_t}{\mathcal{F}_t}} + \norm{\CE{\delta_{t+1} \phi_t}{\mathcal{F}_t}}^2.
\]
Since $\phi_t$ is uniformly bounded on $\mathcal{S} \times \mathcal{A}$, we know that, for some $C_1 > 0$, and also by Jensen's inequality and another application of Cauchy-Schwarz, the right-hand side of the above inequality is
\[
\leq C_1 \Big[ |\delta_{t+1}|^2 + |\delta_{t+1}| \cdot \CE{|\delta_{t+1}|}{\mathcal{F}_t} + \CE{|\delta_{t+1}|^2}{\mathcal{F}_t} \Big].
\]
Now $|\delta_{t+1}| \leq |r_{t+1}| + \gamma | \Omega(\pi(\cdot|s_{t+1}))| + \norm{\gamma \phi_{t+1} - \phi_t} \cdot \norm{\omega_t},$ so, since $\Omega$ is continuous (since it is differentiable) and its domain is compact, and since $r(\cdot, \cdot)$ is uniformly bounded, we have $|\delta_{t+1}| \leq C_2 (1 + \norm{\omega_t}),$ for some $C_2 > 0$. This implies that there exists $C_3 > 0$ such that
\[
\norm{\xi_{t+1}}^2 \leq C_3 \Big[ (1 + \norm{\omega_t})^2 + (1 + \norm{\omega_t}) \CE{(1 + \norm{\omega_t})}{\mathcal{F}_t} + \CE{(1+ \norm{\omega_t})^2}{\mathcal{F}_t} \Big],
\]
which in turn implies the existence of some $C_4 > 0$ such that $\norm{\xi_{t+1}}^2 \leq C_4 ( 1 + \norm{\omega_t})^2$, and thus, for some $C > 0$ large enough, that $\norm{\xi_{t+1}}^2 \leq C(1 + \norm{\omega_t}^2)$ for all possible values of $\xi_{t+1}$. We thus trivially have that $\CE{\norm{\xi_{t+1}}^2}{\mathcal{F}_t} \leq C(1 + \norm{\omega_t}^2)$ a.s., satisfying condition 3.

Taking a closer look at the definition of $h$, we have
\[
h(\omega, s, a) = \CEs{s',a'}{\big[ r(s,a) + \gamma (\phi(s',a')^T \omega - \Omega(\pi(\cdot | s'))) - \phi(s,a)^T \omega \big] \phi(s,a)}{\omega, s, a}
\]
\[
= \sum_{s', a'} p(s'|s,a) \pi(a'|s') \big[ r(s,a) + \gamma (\phi(s',a')^T \omega - \Omega(\pi(\cdot | s'))) - \phi(s,a)^T \omega \big] \phi(s,a).
\]
Thus
\[
\overline{h}(\omega) = \sum_{s,a} \phi(s,a) \nu_{\pi}(s,a) \sum_{s', a'} p(s'|s,a) \pi(a'|s') \big[ r(s,a) + \gamma ( \phi(s',a')^T \omega - \Omega(\pi(\cdot | s')) ) - \phi(s,a)^T \omega \big]
\]
\[
= \sum_{s,a} \phi(s,a) \nu_{\pi}(s,a) \Big[ r(s,a) + \gamma \sum_{s', a'} p(s'|s,a) \pi(a'|s') \big[ \phi(s',a')^T \omega - \Omega(\pi(\cdot | s')) \big] - \phi(s,a)^T \omega \Big]
\]
\[
= \Phi^T N_{\pi} \Big( \bm{r} + \gamma P_{\pi} \big( \Phi \omega - \Omega_{\pi} \big) - \Phi \omega \Big).
\]
Considering the ODE $\dot{\omega} = \overline{h}(\omega)$, we see that it has an equilibrium point when $\overline{h}(\omega) = 0$, i.e. when
\[
\Phi^T N_{\pi} \Big( \bm{r} + \gamma P_{\pi} \big( \Phi \omega - \Omega_{\pi} \big) - \Phi \omega \Big) = \Phi^T N_{\pi}(\bm{r} - \gamma P_{\pi} \Omega_{\pi} + \gamma P_{\pi} \Phi \omega - \Phi \omega)
\]
\[
= \Phi^T N_{\pi} (\bm{r} - \gamma P_{\pi} \Omega_{\pi}) - \Phi^T N_{\pi} (I - \gamma P_{\pi}) \Phi \omega = 0,
\]
i.e. 
\[
\Phi^T N_{\pi} (I - \gamma P_{\pi}) \Phi \omega = \Phi^T N_{\pi} ( \bm{r} - \gamma P_{\pi} \Omega_{\pi} ).
\]
Since $\Phi^T N_{\pi} (I - \gamma P_{\pi}) \Phi$ is invertible, this equilibrium point
\eqn{critic3}{\omega^* = (\Phi^T N_{\pi} (I - \gamma P_{\pi}) \Phi)^{-1} (\Phi^T N_{\pi} ( \bm{r} - \gamma P_{\pi} \Omega_{\pi}))}
is unique, which completes verification of condition 5.

The resulting $\Phi \omega^*$ is precisely the fixed point of the projected Bellman equation (\ref{criticpbe}), \cite{bertsekas}. So, if we can verify the sufficient conditions of Theorem A.2, we can finally invoke Theorem A.1 to complete the proof. By the definition given in Theorem A.2,
\[
h_{\infty}(\omega) = \lim_{c \rightarrow \infty} c^{-1} \overline{h}(c \omega) = \lim_{c \rightarrow \infty} c^{-1} \Phi^T N_{\pi} \Big( \bm{r} + \gamma P_{\pi} \big( \Phi c \omega - \Omega_{\pi} \big) - \Phi c \omega \Big) = \Phi^T N_{\pi} ( \gamma P_{\pi} - I ) \Phi \omega.
\]
The ODE $\dot{\omega} = h_{\infty}(\omega)$ clearly exists and has 0 as its unique globally asymptotically stable equilibrium, whence $\sup_t \norm{\omega_t} < \infty$ a.s., and we invoke Theorem A.1 to obtain that $\omega_t \rightarrow \omega^*$ a.s., completing the proof.

\end{proof}

\subsection{Stochastic Approximation Conditions} Much of the theory concerning reinforcement learning under function approximation, as well as the current work, relies on the following key results of stochastic approximation taken from \cite{borkar}. The form in which the following is presented follows \cite{zhang} quite closely.

Consider the stochastic approximation scheme in $\rn{k}$ given by the update equation
\eqn{sa:eq1}{x_{n+1} = x_n + \alpha_n [h(x_n, Y_n) + \mathcal{M}_{n+1}],}
where $n \in \n$ and $x_0$ is given. Consider also the following conditions.

\begin{enumerate}
\item $h : \rn{k} \times \mathcal{B} \rightarrow \rn{k}$ is Lipschitz continuous in its first argument $x \in \rn{k}$.
\item $\sq{\alpha_n}{n \in \n}$ satisfies $\sum_n \alpha_n = \infty$, $\sum_n \alpha_n^2 < \infty$, and $\alpha_n \geq 0$ for all $n \in \n$.
\item $\sq{\mathcal{M}_n}{n \in \n}$ is a martingale difference sequence with respect to the filtration given by $\mathcal{F}_n = \sigma(x_m, \mathcal{M}_m; m \leq n) = \sigma(x_0, \mathcal{M}_m; m \leq n)$, and furthermore
\eqn{sa:eq2}{\CE{\norm{\mathcal{M}_{n+1}}^2}{\mathcal{F}_n} \leq K(1 + \norm{x_n}^2) \text{ a.s.},}
for all $n \in \n$.
\item $\{Y_n\}$ is an ergodic, irreducible Markov chain on the finite set $\mathcal{B}$ with stationary distribution $\eta$.
\item The ODE
\eqn{sa:eq3}{\dot{x} = \overline{h}(x) = \sum_{b \in \mathcal{B}} \eta(b) h(x, b)}
has a unique globally asymptotically stable equilibrium $x^*$.
\end{enumerate}
Under conditions 1-5 above, we have Theorem A.1 below. Under conditions 1-4 only, we have Theorem A.2.
\begin{theorem}
If $\sup_t \norm{x_t} \leq \infty$ a.s., then $x_t \rightarrow x^*$ a.s.
\end{theorem}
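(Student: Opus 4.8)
The plan is to use the ODE method for stochastic approximation with Markovian noise, following the dynamical-systems viewpoint of \cite{borkar}. The goal is to show that, on the event $\{\sup_n \norm{x_n} < \infty\}$, the suitably interpolated iterates asymptotically track the flow of the averaged ODE $\dot{x} = \overline{h}(x)$, so that the limit set of the iterates must coincide with the limit set of that flow. Since $x^*$ is assumed to be the unique globally asymptotically stable equilibrium of $\dot{x} = \overline{h}(x)$, this forces $x_n \to x^*$ a.s. Concretely, I would fix the time scale $t_0 = 0$, $t_n = \sum_{m=0}^{n-1} \alpha_m$ (which diverges by condition 2), and the piecewise-linear interpolation $\bar{x}(t)$ satisfying $\bar{x}(t_n) = x_n$.

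First I would isolate and dispose of the two sources of noise. The martingale term is the easy one: setting $M_n = \sum_{m=0}^{n-1} \alpha_m \mathcal{M}_{m+1}$, condition 3 gives $\CE{\norm{\mathcal{M}_{m+1}}^2}{\mathcal{F}_m} \leq K(1 + \norm{x_m}^2)$, and on the boundedness event $\sup_m \norm{x_m} < \infty$ together with $\sum_m \alpha_m^2 < \infty$ this yields $\sum_m \alpha_m^2 \, \CE{\norm{\mathcal{M}_{m+1}}^2}{\mathcal{F}_m} < \infty$ a.s. Hence $\{M_n\}$ is an $L^2$-bounded martingale and converges a.s., so its tail increments vanish and the martingale noise contributes nothing to the asymptotic trajectory.

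The Markov noise is the substantive obstacle, since $h(x_n, Y_n)$ does not equal the averaged drift $\overline{h}(x_n)$ pathwise, but only in the mean under the stationary distribution $\eta$. To handle this I would exploit that, by condition 4, $\{Y_n\}$ is an ergodic, irreducible chain on the finite set $\mathcal{B}$, so for each fixed $x$ the Poisson equation $\nu(x,\cdot) - P\nu(x,\cdot) = h(x,\cdot) - \overline{h}(x)$ admits a solution $\nu(x,\cdot)$, and the finiteness of $\mathcal{B}$ together with the Lipschitz dependence of $h$ on $x$ (condition 1) makes $\nu$ Lipschitz in $x$ as well. Writing $h(x_n, Y_n) - \overline{h}(x_n) = \nu(x_n, Y_n) - (P\nu)(x_n, Y_n)$ and telescoping lets me decompose the Markov-noise contribution into a fresh martingale-difference term, which converges by the same $L^2$ argument as above; a telescoping boundary term of order $\alpha_n$, which vanishes; and a remainder controlled by increments of the form $\norm{x_{m+1} - x_m} = O(\alpha_m)$ and $|\alpha_{m+1} - \alpha_m|$, which is asymptotically negligible under the stepsize and boundedness hypotheses. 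The net effect is that the averaged recursion differs from an exact Euler discretization of $\dot{x} = \overline{h}(x)$ only by terms that vanish asymptotically.

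Finally I would assemble these estimates into the standard Gronwall tracking argument: over each finite window $[t_n, t_n + T]$ the interpolated trajectory $\bar{x}$ stays within a vanishing distance of the genuine ODE solution started from $\bar{x}(t_n)$, i.e. $\bar{x}$ is an asymptotic pseudotrajectory of the flow of $\overline{h}$. Combining this with the a.s. boundedness of $\{x_n\}$, the limit set of $\bar{x}$ is a nonempty, compact, connected, flow-invariant set; since $\dot{x} = \overline{h}(x)$ has $x^*$ as its unique globally asymptotically stable equilibrium, the only such invariant set contained in a bounded region is $\{x^*\}$. Hence $\bar{x}(t) \to x^*$, and therefore $x_n \to x^*$ a.s. The hard part throughout is the Markov-noise step, namely establishing existence and Lipschitz regularity of the Poisson-equation solution and bounding the resulting telescoping remainder so that the Markovian correlations genuinely average out; the martingale convergence and Gronwall steps are routine by comparison.
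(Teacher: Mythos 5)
The paper contains no proof of this statement for you to be compared against: Theorem A.1 is imported as a black box from \cite{borkar} (in the form presented in \cite{zhang}), and the appendix explicitly treats it as a known result of stochastic approximation theory. Your proposal therefore supplies what the paper omits, and it does so by reconstructing the standard argument from precisely the cited source: the interpolated timescale $t_n = \sum_{m < n} \alpha_m$, a.s.\ convergence of the martingale transform $\sum_m \alpha_m \mathcal{M}_{m+1}$ from the conditional variance bound together with $\sum_m \alpha_m^2 < \infty$, the Poisson-equation/telescoping treatment of the Markov noise (which for a finite, irreducible, ergodic chain is indeed solvable, e.g.\ via the fundamental matrix $(I - P + \bm{1}\eta^T)^{-1}$, and inherits Lipschitz dependence on $x$ from condition 1 since $\mathcal{B}$ is finite), and the Gronwall tracking step showing the interpolation is an asymptotic pseudotrajectory whose limit set must be $\{x^*\}$. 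The architecture is correct and you correctly identify the Markov-noise decomposition as the substantive step.

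Two details deserve tightening. First, the event $\{\sup_n \norm{x_n} < \infty\}$ has no a priori probability bound and is not measurable at time zero, so your assertion that $\{M_n\}$ is an $L^2$-bounded martingale is not literally available on that event; both martingale arguments (for $\mathcal{M}_{n+1}$ and for the fresh differences $\nu(x_n, Y_{n+1}) - (P\nu)(x_n, Y_n)$) should be localized with stopping times $\tau_N = \inf\{n : \norm{x_n} > N\}$ and the conclusion assembled on $\bigcup_N \{\tau_N = \infty\}$, which is the boundedness event. Second, your final step claims that the only compact flow-invariant set in a bounded region is $\{x^*\}$; this is true under global asymptotic stability, but it is not a consequence of invariance plus attractivity alone (globally attractive but non-Lyapunov-stable equilibria can coexist with larger compact invariant sets), so you must explicitly use the Lyapunov-stability half of the hypothesis --- or, as in \cite{borkar}, invoke the statement that limit sets of the interpolated iterates are internally chain transitive and that such sets collapse to $\{x^*\}$ via a converse Lyapunov function. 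With those repairs your sketch is a faithful, essentially complete version of the result the paper cites without proof.
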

\begin{theorem}
If the function
\eqnn{h_{\infty}(x) = \lim_{c \rightarrow \infty} c^{-1} \overline{h}(cx)}
exists uniformly on compact sets for some $h_{\infty} \in C(\rn{n})$, then, if the ODE $\dot{y} = h_{\infty}(y)$ has the origin as its unique globally asymptotically stable equilibrium, we have $\sup_t \norm{x_t} < \infty$ a.s.
\end{theorem}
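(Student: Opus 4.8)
The plan is to establish this Borkar--Meyn stability criterion by a rescaling argument: I will show that whenever the iterates grow large, the dynamics are governed, after normalization by the current magnitude, by the limiting ODE $\dot{y} = h_{\infty}(y)$, whose global asymptotic stability at the origin forces the normalized trajectory to contract over a fixed ODE-time horizon. The impossibility of sustained growth against this contraction then yields $\sup_t \norm{x_t} < \infty$ a.s.

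First I would introduce the interpolated trajectory. Set $t(0) = 0$, $t(n) = \sum_{m=0}^{n-1} \alpha_m$, and let $\bar{x}(\cdot)$ be the piecewise-linear interpolation of $\{x_n\}$ over these instants. For each scaling $c \geq 1$ define $h_c(x) = c^{-1} \overline{h}(cx)$; by hypothesis $h_c \to h_{\infty}$ uniformly on compact sets, and each $h_c$ shares the Lipschitz constant of $\overline{h}$. Because the origin is the unique globally asymptotically stable equilibrium of $\dot{y} = h_{\infty}(y)$, a continuity-of-solutions argument (Gronwall comparing $\dot{x} = h_c(x)$ with $\dot{x} = h_{\infty}(x)$) supplies a fixed horizon $T > 0$ and a threshold $c_0 \geq 1$ such that every solution of $\dot{x} = h_c(x)$ with $c \geq c_0$ and $\norm{x(0)} \leq 1$ satisfies $\norm{x(T)} \leq 1/4$. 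This uniform-in-$c$ contraction on the unit ball is what I will transfer to the stochastic iterates.

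Next I would set up the rescaling. Define markers $T_0 = 0$ and $T_{n+1} = \min\{t(m) : t(m) \geq T_n + T\}$, so consecutive markers are separated by roughly $T$ units of ODE-time, and put $r(n) = \norm{\bar{x}(T_n)} \vee 1$. On each interval $I_n = [T_n, T_{n+1}]$ consider the normalized trajectory $\hat{x}(t) = \bar{x}(t)/r(n)$, which begins with $\norm{\hat{x}(T_n)} \leq 1$ and obeys, up to error terms, the recursion driven by $h_{r(n)}$. I would then compare $\hat{x}$ on $I_n$ with the solution of $\dot{x} = h_{r(n)}(x)$ emanating from $\hat{x}(T_n)$; the tracking error over $I_n$ splits into the martingale noise $\mathcal{M}_{n+1}$, the Markov-averaging discrepancy $h(x_n, Y_n) - \overline{h}(x_n)$, and the Euler discretization of the flow. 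A Gronwall estimate on $I_n$ bounds this error, and combining it with the contraction above gives, once $r(n) \geq c_0$ and the error is small, $\norm{\hat{x}(T_{n+1})} \leq 1/2$, i.e. $\norm{\bar{x}(T_{n+1})} \leq r(n)/2$. This yields the scalar recursion $r(n+1) \leq (r(n)/2) \vee 1$, so $\sup_n r(n) < \infty$; since on each $I_n$ the interpolated trajectory can expand only by a Gronwall factor depending on $T$ and the Lipschitz constant of $\overline{h}$, boundedness of $\{r(n)\}$ upgrades to $\sup_t \norm{\bar{x}(t)} < \infty$, hence $\sup_n \norm{x_n} < \infty$ a.s.

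The hard part is controlling the noise after rescaling, uniformly across the infinitely many intervals $I_n$, so that the step ``$\norm{\hat{x}(T_{n+1})} \leq 1/2$'' holds for all large $n$ simultaneously rather than along a subsequence. The martingale term is the crux: I would form the rescaled martingale $\sum_m \alpha_m \mathcal{M}_{m+1}/r(k(m))$, where $k(m)$ indexes the interval containing step $m$, and use the quadratic-growth bound (\ref{sa:eq2}) together with $\sum_n \alpha_n^2 < \infty$ to show its increments are square-summable a.s.; by the martingale convergence theorem this rescaled noise converges, so its oscillation over $I_n$ tends to $0$ a.s. The Markov-noise term is handled by the ergodicity and irreducibility of $\{Y_n\}$ on the finite set $\mathcal{B}$ (condition 4), which makes $\overline{h}$ the a.s. limit of the empirical drift; since $h$ is Lipschitz and $\mathcal{B}$ is finite, this discrepancy, stepsize-weighted and normalized, also vanishes. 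Making these estimates uniform in $n$ is the delicate point, and it is precisely where normalization by $r(n)$ is essential: it keeps the per-interval state of size $O(1)$, so by (\ref{sa:eq2}) the conditional variance of each rescaled noise increment is of order $\alpha_m^2$, restoring summability despite the possibly unbounded $\{x_n\}$.
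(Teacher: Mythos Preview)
The paper does not prove this statement: Theorem~A.2 is stated in the appendix as a background result ``taken from \cite{borkar}'' (the Borkar--Meyn stability criterion) and is then invoked in the proof of Theorem~4.1 without any argument of its own. So there is no ``paper's proof'' to compare against; your proposal is an attempt to supply a proof the authors deliberately outsourced to the literature.

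As a sketch of the Borkar--Meyn rescaling argument, your outline is on the right track: the interpolated trajectory, the family $h_c$, the uniform contraction on the unit ball for large $c$, the markers $T_n$ and radii $r(n)$, and the scalar recursion $r(n+1) \leq (r(n)/2)\vee 1$ are all the correct ingredients. The one point that is not yet closed is precisely the one you flag as ``the delicate point.'' Your final sentence asserts that normalization by $r(n)$ ``keeps the per-interval state of size $O(1)$,'' which is exactly what must be \emph{proved} before the rescaled martingale increments can be declared $O(\alpha_m^2)$ via (\ref{sa:eq2}); as written, the claim is circular. In the actual Borkar--Meyn proof this is broken by first deriving, from the Lipschitz property of $h$ alone, a deterministic discrete Gronwall bound of the form $\norm{x_m} \leq C_T\, r(n)\,(1 + \text{rescaled noise on }I_n)$ for $m$ in $I_n$, and then feeding this back into the conditional-variance bound to obtain a self-consistent estimate (often via a stopping-time or bootstrap argument). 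Without making that step explicit, the passage from ``rescaled noise is square-summable'' to ``per-interval state is $O(1)$'' and back again remains a loop rather than a proof.
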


\subsection{Kushner-Clark Lemma} The above convergence proof for the actor step relies on the following result \cite{kushner}, \cite{prasad} for projected stochastic approximation schemes.

Let $\Gamma : \rn{m} \rightarrow \rn{m}$ be a projection onto a compact, convex set $K \subset \rn{m}$. Let
\eqnn{\hat{\Gamma}(h(x)) = \lim_{\epsilon \downarrow 0} \frac{\Gamma(x + \epsilon h(x)) - x}{\epsilon},}
for $x \in K$, and assume $h : \rn{m} \rightarrow \rn{m}$ is continuous on $K$. Consider the update
\eqn{kclemma:eq1}{x_{t+1} = \Gamma(x_t + \alpha_t (h(x_t) + \zeta_{t,1} + \zeta_{t,2}))}
and its associated ODE
\eqn{kclemma:eq2}{\dot{x} = \hat{\Gamma}(h(x)).}
\begin{theorem}
Under the following assumptions, if (\ref{kclemma:eq2}) has a compact set $K'$ as its asymptotically stable equilibria, then the updates (\ref{kclemma:eq1}) converge a.s. to $K'$.
\end{theorem}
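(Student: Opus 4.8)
The plan is to prove this via the ODE method for projected stochastic approximation, following \cite{kushner} and \cite{prasad}. The guiding principle is that the piecewise-linear interpolation of the iterates (\ref{kclemma:eq1}) asymptotically tracks solutions of the projected ODE (\ref{kclemma:eq2}); its limit set is then an invariant set of that flow, which the asymptotic stability of $K'$ forces to lie inside $K'$. First I would make the projection correction explicit: writing the unconstrained step as $y_t = x_t + \alpha_t(h(x_t) + \zeta_{t,1} + \zeta_{t,2})$ and $x_{t+1} = \Gamma(y_t)$, set $\alpha_t p_t = x_{t+1} - y_t$, so that
\[ x_{t+1} = x_t + \alpha_t\big(h(x_t) + \zeta_{t,1} + \zeta_{t,2} + p_t\big), \]
where $p_t$ is the minimal-length vector returning $y_t$ to $K$. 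In the vanishing-stepsize limit this reflection term is exactly the boundary correction encoded by $\hat\Gamma$.

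Next I would construct the continuous interpolation. Setting $t_0 = 0$ and $t_n = \sum_{k=0}^{n-1}\alpha_k$, let $\bar x(\cdot)$ be the piecewise-linear function with $\bar x(t_n) = x_n$, interpolating the noise, bias, and reflection terms analogously. Because every $x_t$ lies in the compact set $K$ and $h$ is continuous on $K$, the drift is uniformly bounded, so the family of time-shifted trajectories $\{\bar x(t_n + \cdot)\}_n$ is equicontinuous and, by Arzel\`a--Ascoli, relatively compact under uniform convergence on compact intervals.

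I would then show the stochastic terms vanish in the limit over any finite horizon. The martingale-noise hypothesis (the condition verified as condition 2 in the proof of Theorem 5.1) states precisely that the tail sums $\sup_{n\ge t}\norm{\sum_{\tau=t}^{n}\alpha_\tau \zeta_{\tau,1}}$ vanish in probability, so the interpolated contribution of $\{\zeta_{t,1}\}$ tends to $0$ a.s.\ along subsequences; the condition $\zeta_{t,2}\to0$ eliminates the bias term; and $\alpha_t\to0$ together with continuity of $h$ controls the discretization error. Any subsequential limit $x(\cdot)$ of $\{\bar x(t_n + \cdot)\}$ therefore solves (\ref{kclemma:eq2}). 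Since the limit set of $\{x_n\}$ is nonempty (compactness) and invariant under this projected flow, the asymptotic stability of $K'$ together with boundedness of the limiting trajectories yields, by a standard Lyapunov argument, that every limit point lies in $K'$, i.e.\ $x_t\to K'$ a.s.

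I expect the main obstacle to be the treatment of the reflection term $p_t$: one must verify that, in the interpolation limit, the accumulated corrections assemble into exactly the Skorokhod-type boundary term defining $\hat\Gamma$, so that the limiting dynamics are the \emph{projected} ODE (\ref{kclemma:eq2}) rather than the unconstrained flow $\dot x = h(x)$. Establishing this requires controlling $p_t$ only on the boundary of $K$ and exploiting the convexity of $K$ -- which makes $\Gamma$ single-valued and nonexpansive -- to guarantee that the correction points inward in the correct limiting sense.
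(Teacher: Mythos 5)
There is no in-paper proof to compare against: the paper states this result (Theorem A.3) purely as imported background, deferring its proof entirely to \cite{kushner} and \cite{prasad}, and uses it only as the engine behind Theorem 5.1. Your sketch reconstructs the standard ODE-method argument from precisely those sources: the decomposition $x_{t+1} = x_t + \alpha_t\big(h(x_t) + \zeta_{t,1} + \zeta_{t,2} + p_t\big)$ with an explicit reflection term $p_t$, piecewise-linear interpolation on the timescale $t_n = \sum_{k < n} \alpha_k$, equicontinuity and Arzel\`a--Ascoli (valid here since all iterates lie in the compact set $K$ and $h$ is continuous, so the drift is uniformly bounded), elimination of the noise and bias via conditions 2 and 3, and identification of subsequential limits with solutions of the projected ODE (\ref{kclemma:eq2}). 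You also correctly isolate the genuine technical crux, namely showing that the accumulated corrections $p_t$ assemble into the $\hat{\Gamma}$ boundary term rather than vanishing or distorting the limit flow, which is exactly where Kushner and Clark spend their effort.

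Two refinements are worth flagging. First, your closing step claims more than invariance delivers: an invariant set of the projected flow need not lie in $K'$, since the ODE may possess equilibria or recurrent sets outside the basin of $K'$. The complete Kushner--Clark hypotheses additionally require the iterates to enter a compact subset of the domain of attraction of $K'$ infinitely often; the paper's statement suppresses this, and your ``standard Lyapunov argument'' silently assumes it, which is harmless only under the implicit reading that $K'$ attracts all of $K$. Second, condition 2 is stated as convergence in probability of the tail suprema, while your argument uses a.s.\ vanishing of the noise contribution; this upgrade is in fact legitimate, because for $t' \geq t$ one has $\sup_{n \geq t'} \norm{\sum_{\tau = t'}^{n} \alpha_\tau \zeta_{\tau,1}} \leq 2 \sup_{n \geq t} \norm{\sum_{\tau = t}^{n} \alpha_\tau \zeta_{\tau,1}}$, so these near-monotone suprema converging to zero in probability forces a.s.\ convergence along the whole sequence, but a complete write-up should include that line rather than assert it.
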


\begin{enumerate}
\item $\sq{\alpha_t}{t \in \n}$ satisfies $\sum_t \alpha_t = \infty, \sum_t \alpha^2_t < \infty.$
\item $\sq{\zeta_{t,1}}{t \in \n}$ is such that
\eqnn{\lim_t P \Big( \sup_{n \geq t} \norm{\sum_{\tau = t}^n \alpha_{\tau} \zeta_{\tau,1}} \geq \epsilon \Big) = 0,}
for all $\epsilon > 0$.
\item $\sq{\zeta_{t,2}}{t \in \n}$ is an a.s. bounded random sequence with $\zeta_{t,2} \rightarrow 0$ a.s.
\end{enumerate}

\end{document}